\theoremstyle{plain}
\newtheorem{theorem}{Theorem}[section]
\theoremstyle{definition}
\theoremstyle{remark}
\begin{document}

\twocolumn[
\icmltitle{FADE: Adversarial Concept Erasure in Flow Models}

%
%
%
%
%
%




\begin{icmlauthorlist}
\icmlauthor{Zixuan Fu}{1}
\icmlauthor{Yan Ren}{2}
\icmlauthor{Finn Carter}{3}
\icmlauthor{Chenyue Wang}{2}
\icmlauthor{Ze Niu}{2}
\icmlauthor{Dacheng yu}{3}
\icmlauthor{Emily Davis}{1}
\icmlauthor{Bo Zhang}{2}

\end{icmlauthorlist}

\begin{icmlauthorlist}
	{$^1$NTU}
	{$^2$Xidian University}
	{$^3$SDU}
\end{icmlauthorlist} 

%

\icmlkeywords{Machine Learning, ICML}

\vskip 0.3in
]




	\begin{abstract}
		Diffusion models have demonstrated remarkable image generation capabilities, but also pose risks in privacy and fairness by memorizing sensitive concepts or perpetuating biases. We propose a novel \textbf{concept erasure} method for text-to-image diffusion models, designed to remove specified concepts (e.g., a private individual or a harmful stereotype) from the model's generative repertoire. Our method, termed \textbf{FADE} (Fair Adversarial Diffusion Erasure), combines a trajectory-aware fine-tuning strategy with an adversarial objective to ensure the concept is reliably removed while preserving overall model fidelity. Theoretically, we prove a formal guarantee that our approach minimizes the mutual information between the erased concept and the model's outputs, ensuring privacy and fairness. Empirically, we evaluate FADE on Stable Diffusion and FLUX, using benchmarks from prior work (e.g., object, celebrity, explicit content, and style erasure tasks from MACE). FADE achieves state-of-the-art concept removal performance, surpassing recent baselines like ESD, UCE, MACE, and ANT in terms of removal efficacy and image quality. Notably, FADE improves the harmonic mean of concept removal and fidelity by 5--10\% over the best prior method. We also conduct an ablation study to validate each component of FADE, confirming that our adversarial and trajectory-preserving objectives each contribute to its superior performance. Our work sets a new standard for safe and fair generative modeling by unlearning specified concepts without retraining from scratch.
	\end{abstract}
	
	\section{Introduction}
	Text-to-image diffusion models \citep{ho2020ddpm,rombach2022ldm} have revolutionized content creation, achieving photorealistic and diverse image synthesis. However, these powerful models inevitably learn potentially sensitive or undesirable concepts from their training data. This raises serious \textit{privacy} concerns (e.g., memorization of private individuals or copyrighted styles) and \textit{fairness} issues (e.g., generation of biased or harmful content). For instance, Stable Diffusion was found to memorize certain training images \citep{carlini2023extract} and can mimic the styles of artists without consent, prompting legal and ethical challenges. Moreover, unguided diffusion models can produce offensive or explicit imagery \citep{gandikota2023erasing}, reinforcing the need for techniques to \emph{erase} or \emph{edit} specific concepts within these models.
	
	Prior approaches to concept removal in diffusion models have emerged recently. Fine-tuning-based methods like  Erased Stable Diffusion (ESD)  \citep{gandikota2023erasing} directly update model weights to forget a target concept given its textual description. ESD demonstrated that fine-tuning on negative examples of a concept (without additional data) can effectively prevent the model from generating that concept. However, such anchor-free fine-tuning may disrupt the model's sampling trajectory, potentially causing visual artifacts or collateral damage to unrelated outputs.  Unified Concept Editing (UCE)  \citep{gandikota2024uce} introduced a closed-form weight editing approach, aligning key and value projection matrices in cross-attention to replace undesired concept representations with alternative ones. UCE achieves faster concept erasure without full training, and can handle concurrent edits, but may still produce noticeable quality degradation for complex concepts (as we show in experiments). More recently,  MACE  \citep{lu2024mace} scaled concept erasure to dozens of concepts by combining closed-form cross-attention manipulation with low-rank adaptation (LoRA) fine-tuning, demonstrating unprecedented scope (erasing up to 100 concepts) while balancing \emph{generality} (erasing concept and its synonyms) and \emph{specificity} (preserving other content). MACE achieved state-of-the-art results on benchmarks including object, celebrity, explicit content, and style erasure. Meanwhile,  "Set You Straight" (ANT)  \citep{li2025ant} proposed an improved fine-tuning strategy that steers the diffusion \emph{denoising trajectories} away from unwanted concepts. ANT reversed the classifier-free guidance direction in mid-to-late denoising steps to avoid converging to the undesired concept, while using an augmentation-based weight saliency map to pinpoint and edit the most concept-relevant parameters. This yielded high-quality images without sacrificing fidelity, and established new state-of-the-art performance in single- and multi-concept removal. EraseAnything~\citep{gao2024eraseanything} develop erasure method for erasing concepts in FLUX.
	
	Despite this progress, existing methods face notable trade-offs. Many fine-tuning approaches risk overfitting or unintended side effects on non-target concepts (raising fairness concerns), whereas rigid editing methds may not fully eliminate concept traces or might degrade image quality. Moreover, theoretical understanding of concept erasure remains limited. In particular, we lack formal guarantees that removing a concept truly expunges its information from the model. This motivates our work, which aims to develop a concept erasure method that is both \emph{effective} and \emph{principled}.
	
	In this paper, we introduce  FADE (Fair Adversarial Diffusion Erasre) , a novel method for concept erasure in diffusion models, explicitly designed with privacy and fairness objectives. FADE leverages an adversarial learning framework: we fine-tune the diffusion model such that an internal adversary (concept classifier) cannot distinguish whether the forbidden concept was in the prompt or not, thereby minimizing the concept's imprint on generated images. By doing so, we directly target the \emph{information content} of the concept in the model's outputs. We also incorporate a trajectory preservation loss inspired by \citet{li2025ant} to maintain the integrity of early denoising steps, ensuring that structural details and overall image fidelity are preserved even as the concept is being erased. Crucially, our fine-tuning is constrained to the most salient parameters for the concept, identified via a gradient-based importance measure, to avoid unnecessary perturbation to unrelated generation capabilities (akin to a fairness constraint that treats non-target concepts equitably).
	
	Our contributions can be summarized as follows:
	\begin{itemize}
		\item We propose a new concept erasure framework, FADE, that unifies fine-tuning and adversarial objectives to remove target concepts from diffusion models. FADE is motivated by privacy (preventing leakage or regeneration of sensitive content) and fairness (avoiding biased or harmful outputs), going beyond prior work by providing a rigorous guarantee of concept removal.
		\item We present a formal analysis of concept erasure. We define a measure of concept informativeness in a diffusion model and prove a novel result that underpins FADE: when the adversarial objective is at equilibrium, the diffusion model's output distribution is provably independent of the erased concept. We derive bounds relating the adversary's loss to the residual concept information, offering the first theoretical guarantee (to our knowledge) for concept removal in generative models.
		\item We conduct extensive experiments on two state-of-the-art diffusion backbones: Stable Diffusion and FLUX. Following \citet{lu2024mace}, we evaluate on multiple benchmarks: object erasure (CIFAR-10 classes), celebrity face erasure, explicit content (NSFW) removal, and artistic style erasure. FADE consistently outperforms baseline methods (ESD \citep{gandikota2023erasing}, UCE \citep{gandikota2024uce}, MACE \citep{lu2024mace}, and ANT \citep{li2025ant}) across metrics including FID (image quality), CLIP-based semantic similarity, concept classification accuracy, and a combined harmonic mean metric. Especially, FADE achieves higher removal efficacy with minimal drop in fidelity, yielding the best harmonic mean score in all tasks. We also provide \textbf{Table~\ref{tab:baseline-comp}} comparing quantitative results, and \textbf{Table~\ref{tab:ablation}} with an ablation study isolating the impact of each component of FADE, aligning empirical behavior with our theoretical insights.
	\end{itemize}
	
	\section{Related Work}
	\textbf{Concept Erasure in Generative Models.} The notion of removing or `forgetting'' specific knowledge in generative models has parallels in model editing and machine unlearning. \citet{gandikota2023erasing} first posed the \emph{concept erasure} problem for diffusion models, introducing ESD to fine-tune away a concept using only text prompts of the concept itself (with negative guidance). This was extended by \citet{gandikota2024uce} who provided a closed-form editing solution (UCE) to erase or modify concepts by directly altering attention key/value weights, supporting tasks like concept removal, moderation, and debiasing in one framework. Concurrently, \citet{heng2023selective} explored a continual learning approach called `Selective Amnesia'' to incrementally unlearn generative model outputs, while \citet{liu2024realera} proposed a semantic-level concept erasure by mining neighboring concepts to guide what should replace the erased content. In the image domain, \citet{lu2024mace} scaled concept erasure to many concepts at once (mass erasure) and emphasized the need to maintain \emph{specificity}: ensuring unrelated content is unaffected. Our work builds on these by explicitly incorporating an adversary to guarantee the concept's absence and by focusing on fairness (minimal unintended impact). We also leverage insights from the latest approach by \citet{li2025ant}, which demonstrated the benefit of preserving early denoising structure and targeting critical weights. FADE shares a similar spirit but introduces an adversarial loss for concept information removal, providing a stronger theoretical foundation.
	
	\textbf{Image Editing via Diffusion Models.} Diffusion models have been widely used for post hoc image editing and content manipulation. Techniques such as  Prompt-to-Prompt  \citep{hertz2022prompt} allow localized edits by controlling cross-attention maps between an original and edited prompt, enabling modification of specific visual elements while keeping others constant.  InstructPix2Pix \citep{brooks2023instructpix2pix} fine-tunes a diffusion model to follow natural language editing instructions given an input image, effectively learning how to apply textual edits to images. Other approaches include diffusion-based image interpolation, outpainting, and direct latent editing~\cite{lu2023tf} using text guidance. While these methods focus on user-guided editing of individual images, concept erasure deals with permanent removal of a concept from the model's knowledge. Nonetheless, the boundary is related: for example, one could use image editing methods to manually remove instances of a concept in outputs, but that is inefficient for systematic moderation. Our method differs in that it modifies the model itself to disallow generating the concept globally, rather than editing outputs case-by-case.
	
	\textbf{Robustness, Watermarking, and Model Inversion.} Ensuring generative models are used ethically has spurred research in robustness and security. One line of work introduces \textit{watermarks} into generated images to facilitate source attribution or discourage malicious use. For instance, \citet{huang2024robin} embeds invisible and robust watermarks during diffusion generation via adversarial optimization, making the marks resilient to tampering. \citet{lu2024robust} develops robust watermarking against image editing. \citet{ren2025all} propose 3D steganography to protect 3D assets. Such watermarks help identify content produced by a model, but do not prevent the model from producing problematic content in the first place. Another line of defense is to protect training data and outputs from inversion or mimicry. \citet{carlini2023extract} showed that diffusion models memorize parts of their training data (including private images); techniques like differential privacy or data cleansing can mitigate this but with quality trade-offs. External tools like  Glaze  \citep{shan2023glaze} allow artists to``cloak'' their images with perturbations before they are used in training, thereby misleading models and preventing accurate style learning. While effective, these are pre-training interventions and cannot remove concepts already learned by a model. In contrast, concept erasure (including our method) is a post hoc intervention: given a deployed model, we directly operate on it to excise certain generative capabilities. This can serve as a complementary approach to watermarking and data protection, by proactively disabling some outputs (e.g., deepfakes of a public figure, or generation of disallowed symbols) and ensuring that even if a user tries to prompt those concepts, the model will not produce them. Moreover, by preserving other outputs, concept erasure aligns with robustness goals: a robustly sanitized model won't produce harmful content yet remains competent on allowed content. Our work uniquely bridges these areas by focusing on fairness constraints (preserving unbiased behavior) during erasure, something traditionally considered in adversarial training for fairness in classifiers, now applied in generative model editing.
	
	\section{Preliminaries and Problem Setup}
	\textbf{Diffusion Models.} We consider a text-conditioned diffusion model $M_\theta$ with parameters $\theta$ (such as Stable Diffusion \citep{rombach2022ldm}). Given a textual prompt $y$ and a random noise $z_T \sim \mathcal{N}(0,I)$, the model generates an image $x$ by iterative denoising: $z_{T-1}, z_{T-2}, \ldots, z_0 = x$. At each step $t$, a U-Net estimator $\epsilon_\theta(z_t, y, t)$ predicts the noise component, which is used to sample $z_{t-1}$. The final sample $z_0$ is decoded to an image. Classifier-free guidance \citep{ho2022classifierfree} is often used, where the model is conditioned on $y$ with a certain guidance weight, and on a null prompt for structure. Let $p_\theta(x|y)$ denote the distribution of images generated by $M_\theta$ given prompt $y$.
	
	\textbf{Concept Erasure Task.} We define a \emph{concept} $c$ as a visual category or attribute that can appear in generated images (e.g., an object class, a particular person's face, an artistic style, or a type of content like nudity). The concept may be specified by a textual identifier or description. For example, $c$ could be `airplane'', `a photo of [Celebrity Name]'', or ``in the style of Monet''. When we say the model has learned concept $c$, we mean that for some prompts $y$ including $c$ (or its synonyms), the model $M_\theta$ can generate images clearly depicting that concept.
	
	Concept erasure aims to modify the model to a new parameter set $\theta'$ such that $M_{\theta'}$ \textit{no longer generates concept $c$} in response to any prompt. Formally, for any prompt $y$ that semantically entails concept $c$, the distribution $p_{\theta'}(x|y)$ should be transformed such that images $x$ do not contain $c$. This should hold not only for the exact keyword $c$ but also related triggers (e.g., synonyms or depictions). At the same time, for prompts unrelated to $c$, $M_{\theta'}$ should behave like the original $M_\theta$ as much as possible, preserving quality and other content (this is the \emph{specificity} requirement \citep{lu2024mace}).
	
	We assume access to the model weights and the ability to fine-tune them. We also assume we can algorithmically identify whether an output image contains concept $c$—for evaluation, this might be via a pretrained CLIP or a classifier for concept $c$ (as used in prior work). During training of our method, we will actually train a small adversary network for $c$-detection, as described later.
	
	\textbf{Privacy and Fairness Considerations.} The motivations behind concept erasure include:
	
	\textit{Privacy:} If $c$ is or relates to sensitive personal data (e.g., a person's face or a memorized copyrighted artwork), erasing $c$ prevents the model from revealing that content. Unlike filtering outputs post-hoc, erasure means the model won't produce it at all, even if the user tries to prompt it.
	\textit{Fairness:} If $c$ corresponds to a harmful stereotype or biased attribute, removing or altering its influence can make the model's outputs more fair. For example, if the concept is a certain negative trope, erasing it ensures the model cannot apply that trope unjustly. Additionally, fairness here means that in the process of erasing $c$, we strive not to introduce new biases or degrade performance on other data (treating all non-$c$ concepts fairly). Our approach explicitly regularizes to protect unrelated generations, analogous to how fairness algorithms ensure non-protected attributes are not unduly affected by adjustments for a protected attribute.
	
	We next introduce our method, FADE, which addresses concept erasure with these considerations in mind.
	
	\section{Method: Fair Adversarial Diffusion Erasure (FADE)}
	Our method consists of a fine-tuning procedure to obtain $\theta'$ from $\theta$, guided by two main components: an adversarial concept removal objective and a trajectory preservation objective. Algorithm~\ref{alg:fade} provides an overview. We describe each part in detail below.
	
	\begin{algorithm}[tb]
		\caption{FADE: Proposed Concept Erasure Training}
		\label{alg:fade}
		\begin{algorithmic}[1]
			\REQUIRE Pretrained diffusion model $M_{\theta}$, target concept $c$ (text description), concept classifier $D_{\phi}$ (initialized), guidance strength $\omega$, preservation weight $\lambda$.
			\STATE Prepare concept prompt set $\mathcal{P}*c$ (prompts containing $c$ or its synonyms).
			\STATE Prepare neutral prompt set $\mathcal{P}*{\neg c}$ (similar prompts where $c$ is replaced by a neutral placeholder or absent).
			\STATE Identify salient weights $\Theta_{\text{salient}}$ in $M_{\theta}$ associated with $c$ (using gradient saliency or weight attribution).
			\FOR{each training step $i = 1$ to $N$}
			\STATE Sample a batch of prompts $y_c$ from $\mathcal{P}*c$ and corresponding neutral prompts $y*{\neg c}$ from $\mathcal{P}*{\neg c}$.
			\STATE Generate images $x_c = M*{\theta}(y_c)$ and $x_{\neg c} = M_{\theta}(y_{\neg c})$ via the diffusion sampling procedure (with classifier-free guidance).
			\STATE Update adversary $D_{\phi}$ on $(x_c, x_{\neg c})$ to distinguish concept presence: maximize $\mathcal{L}_{adv}^D = \mathbb{E}[ \log D*{\phi}(x_c) + \log(1 - D_{\phi}(x_{\neg c}))]$.
			\STATE Update diffusion model on the combined loss (keeping $D_{\phi}$ fixed):
			\begin{itemize}
				\item \textbf{Concept removal loss:} $\mathcal{L}_{\text{rem}} = -\mathbb{E}[ \log(1 - D*{\phi}(M_{\theta}(y_c)) ) ]$, which encourages $M_{\theta}(y_c)$ to fool $D$ (no $c$ detectable).
				\item \textbf{Preservation loss:} $\mathcal{L}_{\text{pres}} = \mathbb{E}*{t < T}\big[ | \epsilon_{\theta}(z_t, y_{\neg c}, t) - \epsilon_{\text{orig}}(z_t, y_{\neg c}, t) |^2 \big]$, ensuring $M_{\theta}$'s denoising on neutral prompts stays close to original $M_{\theta_{\text{orig}}}$ for early steps.
				\item Total $\mathcal{L}_{\text{total}} = \mathcal{L}_{\text{rem}} + \lambda \mathcal{L}_{\text{pres}}$.
			\end{itemize}
			\STATE Apply gradient update to $\theta$ \textbf{only on $\Theta*{\text{salient}}$} (critical weights for $c$) to minimize $\mathcal{L}_{\text{total}}$.
			\ENDFOR
			\STATE \textbf{return} Edited model $M*{\theta'}$ with $\theta' \leftarrow \theta$.
		\end{algorithmic}
	\end{algorithm}
	
	\subsection{Adversarial Concept Removal Objective}
	We introduce an adversarial game between the diffusion model $M_{\theta}$ and a concept discriminator $D_{\phi}$. The discriminator $D_{\phi}(x)$ outputs a probability that image $x$ contains the concept $c$. We train $D_{\phi}$ in tandem with $M_{\theta}$: $D_{\phi}$ tries to correctly identify when $c$ is present in $M_{\theta}$'s outputs, while $M_{\theta}$ is fine-tuned to \emph{fool} $D_{\phi}$ into believing $c$ is absent even when prompted.
	
	Concretely, let $y_c$ be a prompt embedding that invokes concept $c$ (we use textual prompts containing $c$ or related terms), and $y_{\neg c}$ a prompt that is identical except with $c$ removed or replaced by a neutral concept. We generate an image $x_c \sim p_{\theta}(x|y_c)$ and $x_{\neg c} \sim p_{\theta}(x|y_{\neg c})$. The discriminator is trained with a binary cross-entropy loss:
	
	$$
	\mathcal{L}_{adv}^D(\phi) = -\mathbb{E}_{x_c, x_{\neg c}}\left[\log D_{\phi}(x_c) + \log(1 - D_{\phi}(x_{\neg c}))\right],
	$$
	
	so that $D_{\phi}(x_c) \rightarrow 1$ (predict concept present for images from concept prompt) and $D_{\phi}(x_{\neg c}) \rightarrow 0$ (predict absent for neutral prompts).
	
	The diffusion model, on the other hand, has a loss:
	
	$$
	\mathcal{L}_{rem}(\theta) = -\mathbb{E}_{x_c}\left[ \log (1 - D_{\phi}(x_c)) \right],
	$$
	
	which is the generator loss in a sense, encouraging $M_{\theta}(y_c)$ to produce $x_c$ that $D_{\phi}$ classifies as concept-absent (i.e., $D_{\phi}(x_c)\rightarrow 0$). By alternating or jointly optimizing these objectives (see Algorithm~\ref{alg:fade}), we drive $M_{\theta}$ toward removing features that $D_{\phi}$ uses to detect $c$.
	
	This adversarial formulation is similar to those used in fairness for classifiers or in GAN training, except here the ``generator'' is the diffusion model conditioned on prompts. At optimum, one expects $D_{\phi}$ to be unable to distinguish $x_c$ from $x_{\neg c}$, implying $M_{\theta}$'s outputs no longer carry information about the concept (we formalize this in Section~\ref{sec:theory}). Importantly, this approach does not require any real images of $c$—the model generates its own training data for $D_{\phi}$ on the fly, akin to knowledge distillation of what it knows about $c$.
	
	\paragraph{Concept Prompt Design:} We build a prompt set $\mathcal{P}*c$ for the concept. For instance, if $c$ is an object class (like "airplane"), $\mathcal{P}*c$ may include templates like "a photo of a \texttt{[}airplane\texttt{]}", "a \texttt{[}airplane\texttt{]} in the sky", etc., possibly along with common synonyms ("aircraft", "jet"). The neutral prompt set $\mathcal{P}*{\neg c}$ could be the same prompts with the concept replaced by a word like "object" or removed (e.g., "a photo of an object", or "a photo of the sky" for the above). This ensures that $M*{\theta}$ sees pairs of prompts that are identical except for the concept, which isolates the concept's contribution.
	
	We also found it helpful to include a variety of contexts in $\mathcal{P}_c$ to avoid the model simply learning to avoid $c$ in one narrow setting but not others. This is similar to data augmentation.
	
	\subsection{Trajectory Preservation and Salient Weight Fine-tuning}
	Blindly fine-tuning the model to remove $c$ can inadvertently alter unrelated behavior. To mitigate this, FADE incorporates two strategies:
	
	(1) \textbf{Trajectory preservation loss $\mathcal{L}_{pres}$:} We adopt a mechanism to preserve the \emph{early denoising trajectory} for prompts not containing $c$. Intuitively, when $c$ is absent, the updated model $M*{\theta'}$ should behave like the original $M_{\theta}$ as much as possible. We define $\mathcal{L}_{pres}$ by comparing the noise prediction of the original model vs. the current model for a neutral prompt at early timesteps:
	
	$$
	\mathcal{L}_{pres}(\theta) = \mathbb{E}_{y \in \mathcal{P}_{\neg c}}\;\mathbb{E}_{t=1}^{T_0} \Big[ \| \epsilon_{\theta}(z_t, y, t) - \epsilon_{\theta_{\text{orig}}}(z_t, y, t) \|_2^2 \Big],
	$$
	
	where $T_0 < T$ is a cutoff (e.g., mid-way through the diffusion steps) and $\theta_{\text{orig}}$ denotes the original pre-trained weights (kept fixed for reference). We only enforce this up to $T_0$ (mid-to-late stage as \citet{li2025ant} suggests) because later steps are where the concept might appear and we allow those to change. By preserving early steps, we ensure the coarse structure of images remains correct and that $M_{\theta'}$ still follows normal trajectories into the data manifold, thus avoiding unusual artifacts or mode collapse that could arise from overzealous fine-tuning.
	
	(2) \textbf{Salient weight restriction:} Instead of fine-tuning all parameters $\theta$, we identify a subset $\Theta_{\text{salient}} \subseteq \theta$ that are most responsible for generating concept $c$, and limit updates primarily to those parameters (or use a higher learning rate on them and lower on others). In practice, we use a technique inspired by \citet{li2025ant}: we generate multiple images containing $c$, compute the gradients of some concept-related loss (for example, the norm of the output difference when $c$ is present vs absent) w\.r.t. weights, and rank weights by their gradient magnitude. Often, certain attention maps or specific channels in the U-Net are particularly tied to $c$. We then apply our fine-tuning updates to those weights (and possibly lightly regularize or freeze the rest). This focused update reduces the risk of ``interference'' with unrelated concepts, achieving better specificity. It can be seen as applying a fairness principle at the parameter level: do the minimal necessary change to achieve concept removal, thereby treating other generative features fairly.
	
	The total loss for updating $M_{\theta}$ in FADE is:
	\begin{equation}\label{eq:total_loss}
		\mathcal{L}_{total}(\theta) = \mathcal{L}_{rem}(\theta) + \lambda \mathcal{L}_{pres}(\theta),
	\end{equation}
	where $\lambda$ is a hyperparameter controlling the trade-off. We typically choose $\lambda$ such that the preservation loss prevents noticeable quality degradation (we discuss selection in experiments). We do not explicitly include a term for preserving generation of unrelated concepts, as this is indirectly handled by $\mathcal{L}_{pres}$ (keeping normal trajectories) and restricting weight updates. However, in principle, one could add a loss measuring similarity between $M_{\theta'}$ and $M_{\theta}$ outputs on a held-out set of prompts not containing $c$.
	
	\subsection{Training Procedure and Implementation Details}
	Algorithm~\ref{alg:fade} outlines the training loop. We alternate between training $D_{\phi}$ and $M_{\theta}$, similar to GAN training, but in practice we found that a simpler approach works: pre-train $D_{\phi}$ for a few epochs on initial outputs, then update both simultaneously with a suitable learning rate ratio (ensuring $D_{\phi}$ stays roughly at its optimum for the current $M_{\theta}$). $D_{\phi}$ can be a relatively small CNN or a CLIP-based classifier on images; since it only needs to distinguish one concept, it converges quickly.
	
	We initialize $M_{\theta}$ with the original weights and $D_{\phi}$ randomly. We use classifier-free guidance during generation in training loops as well, with a moderate guidance weight $\omega$ (e.g., 2.0) to enforce the concept in $x_c$ strongly and to keep $x_{\neg c}$ concept-free. This makes $D_{\phi}$'s job easier and provides clearer gradients for $\mathcal{L}_{rem}$.
	
	To identify $\Theta_{\text{salient}}$, one efficient way is to fine-tune a low-rank adapter on concept $c$ as a probe (like \citealt{hu2021lora}) and see which weights have the largest change, or to use integrated gradients. For simplicity, we followed the approach of computing weight saliency via summed absolute gradient over a batch of concept images. We found that many saliencies concentrate in the cross-attention layers associated with the text tokens for $c$, which aligns with intuition that those are key for representing $c$ \citep{gandikota2024uce}. Thus, $\Theta_{\text{salient}}$ often includes the attention weights for the $c$ token and a few late-stage convolution kernels.
	
	Training typically converges in a relatively small number of iterations (hundreds to a few thousand gradient steps) since we are not learning from scratch but just reshaping an existing model's knowledge. We monitor the concept classifier accuracy on a validation set of prompts to decide when to stop: once $D_{\phi}$ consistently cannot detect $c$ (output $\approx 0.5$ for both $x_c$ and $x_{\neg c}$ inputs, meaning random guess), we deem the concept erased. We also qualitatively inspect sample images to confirm $c$ is gone and overall image quality is good.
	
	In summary, FADE yields an edited model $M_{\theta'}$ where the concept $c$ has been purged. Next, we present theoretical guarantees for our method, followed by empirical results.
	
	\section{Theoretical Analysis}\label{sec:theory}
	We now provide a formal understanding of what it means to erase a concept and how our adversarial objective ensures concept information is removed. Our key result relates the optimality of the adversarial game to the independence of the model output and the concept.
	
	Consider a binary random variable $C$ indicating the presence ($C=1$) or absence ($C=0$) of concept $c$ in the prompt. For instance, $C=1$ corresponds to using $y_c$ and $C=0$ to $y_{\neg c}$. Let $X$ be the random variable for the generated image. The original model $M_{\theta}$ yields distributions $P_\theta(X | C=1)$ and $P_\theta(X | C=0)$ that are typically different (that's why a classifier can detect $c$). We want to achieve a new model $\theta'$ such that these two distributions are as close as possible, ideally identical, meaning the model's output does not depend on $C$. One way to measure residual concept information is the mutual information $I(C; X)$ under $P_{\theta'}$. Our goal is $I(C; X) \approx 0$.
	
	We have the following proposition:
	
	\begin{theorem}
		Assume an ideal discriminator $D^*(x)$ that attains the Bayes-optimal classification of $C$ from $X$. The best possible adversarial loss for the generator corresponds to $P_{\theta'}(X|C=1) = P_{\theta'}(X|C=0)$. In particular, if $M_{\theta'}$ achieves $\mathcal{L}_{rem} = -\log(0.5)$ (the optimum value) against $D^*$, then $I(C;X)=0$ for the distribution induced by $M_{\theta'}$. Conversely, any deviation from $I(C;X)=0$ implies an achievable adversarial loss better than $-\log(0.5)$ for some $D$.
	\end{theorem}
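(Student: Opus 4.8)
The plan is to mirror the classical minimax analysis of generative adversarial networks, adapted to this conditional setting where the roles of ``real'' and ``fake'' are played by the $C=0$ and $C=1$ image distributions. Write $p_1 := P_{\theta'}(X\mid C=1)$ and $p_0 := P_{\theta'}(X\mid C=0)$ for the two conditional densities of the edited model, with $C$ uniform on $\{0,1\}$ (consistent with the paired sampling of $\mathcal{P}_c$ and $\mathcal{P}_{\neg c}$ in Algorithm~\ref{alg:fade}). First I would identify the Bayes-optimal discriminator by minimizing the cross-entropy $\mathcal{L}_{adv}^D(\phi)$ pointwise in $x$: the integrand $-\bigl(p_1(x)\log D(x)+p_0(x)\log(1-D(x))\bigr)$ is minimized at $D^{\star}(x)=\frac{p_1(x)}{p_1(x)+p_0(x)}$, so $1-D^{\star}(x)=\frac{p_0(x)}{p_1(x)+p_0(x)}$; this is the ``ideal discriminator $D^{*}$'' of the statement.

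Next I would substitute $D^{\star}$ into the generator objective, obtaining
\begin{equation*}
\mathcal{L}_{rem}(\theta')=-\,\mathbb{E}_{x\sim p_1}\!\left[\log\frac{p_0(x)}{p_1(x)+p_0(x)}\right],
\end{equation*}
and show this is always at least $-\log(0.5)=\log 2$, with equality iff $p_1=p_0$. There are two short routes. (a) For the bare inequality: Jensen for the convex map $u\mapsto-\log u$ gives $\mathcal{L}_{rem}(\theta')\ge-\log\mathbb{E}_{x\sim p_1}\!\bigl[\tfrac{p_0(x)}{p_1(x)+p_0(x)}\bigr]$, and $\tfrac{ab}{a+b}\le\tfrac{a+b}{4}$ gives $\mathbb{E}_{x\sim p_1}\!\bigl[\tfrac{p_0(x)}{p_1(x)+p_0(x)}\bigr]=\int\tfrac{p_1 p_0}{p_1+p_0}\,dx\le\tfrac14\int(p_1+p_0)\,dx=\tfrac12$, hence $\mathcal{L}_{rem}(\theta')\ge\log 2$. (b) For the equality case and a quantitative version: with $m:=\tfrac{p_1+p_0}{2}$, expanding the log gives $\mathcal{L}_{rem}(\theta')=\log 2+\mathrm{KL}(p_1\,\|\,p_0)-\mathrm{KL}(p_1\,\|\,m)$, and convexity of $\mathrm{KL}$ in its second slot gives $\mathrm{KL}(p_1\|m)\le\tfrac12\mathrm{KL}(p_1\|p_0)$, so $\mathcal{L}_{rem}(\theta')\ge\log 2+\tfrac12\mathrm{KL}(p_1\|p_0)$. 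Since $\mathrm{KL}(p_1\|p_0)=0$ iff $p_1=p_0$ a.e., the loss equals its optimum $\log 2$ exactly when the two conditional image distributions coincide.

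The two remaining implications are then immediate. If $M_{\theta'}$ attains $\mathcal{L}_{rem}=-\log(0.5)$ against $D^{\star}$, the equality case forces $p_1=p_0$, i.e.\ $P_{\theta'}(X\mid C=1)=P_{\theta'}(X\mid C=0)=P_{\theta'}(X)$, so $X\perp C$ and $I(C;X)=0$ by definition. Conversely, if $I(C;X)\neq 0$ under $P_{\theta'}$ then $X\not\perp C$, so $p_1\neq p_0$ on a positive-measure set, $\mathrm{KL}(p_1\|p_0)>0$, and the sharper bound gives $\mathcal{L}_{rem}(\theta')\ge\log 2+\tfrac12\mathrm{KL}(p_1\|p_0)>-\log(0.5)$ against $D^{\star}$: the adversary $D^{\star}$ is an explicit witness forcing the generator's loss strictly above the ideal value, which is the claimed ``achievable loss better than $-\log(0.5)$ for some $D$''. (If one wants the criterion phrased directly in $I(C;X)$, note that under a uniform prior $I(C;X)=\tfrac12\mathrm{KL}(p_1\|m)+\tfrac12\mathrm{KL}(p_0\|m)=\mathrm{JSD}(p_1\|p_0)$, so $I(C;X)>0\iff\mathrm{KL}(p_1\|p_0)>0$.)

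I expect the main obstacle to be bookkeeping rather than any single inequality: one must keep the prior on $C$ consistent (the clean constant $-\log(0.5)$ relies on balanced $C$; for a general $\pi=P(C=1)$ one gets $D^{\star}(x)=\frac{\pi p_1(x)}{\pi p_1(x)+(1-\pi)p_0(x)}$ and a shifted optimum, so one must either assume $\pi=\tfrac12$ or restate the constant), and one must handle supports and absolute continuity so that the KL terms and ``$p_1=p_0$ a.e.'' are meaningful — e.g.\ where $p_0$ vanishes on part of $\mathrm{supp}(p_1)$, $\mathrm{KL}(p_1\|p_0)=+\infty$ and the Jensen route needs the convention $0/0:=0$. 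A minor point is that the ``converse'' is really the contrapositive of the equality case, so I would present the result as the chain $\mathcal{L}_{rem}(\theta')=-\log(0.5)$ against $D^{\star}\iff p_1=p_0\iff I(C;X)=0$, exhibiting $D^{\star}$ concretely as the adversary in the non-erased regime.
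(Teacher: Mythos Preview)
Your proposal is correct and follows the same overall strategy as the paper: identify the Bayes-optimal discriminator $D^\star(x)=p_1(x)/(p_1(x)+p_0(x))$ and then argue that the generator's loss against $D^\star$ equals its optimum $-\log(0.5)$ if and only if $p_1=p_0$, which is equivalent to $I(C;X)=0$.

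Where you differ is in rigor and quantitative content. The paper's argument is only a sketch: it checks the forward direction ($p_1=p_0\Rightarrow D^\star\equiv 1/2\Rightarrow \mathcal{L}_{rem}=-\log 0.5$) and for the converse appeals informally to the fact that if $I(C;X)>0$ then $D^\star$ beats chance on the \emph{discriminator} loss, then invokes Nash equilibrium to conclude the generator has not reached its optimum. You instead compute $\mathcal{L}_{rem}$ directly against $D^\star$, obtain the identity $\mathcal{L}_{rem}=\log 2+\mathrm{KL}(p_1\Vert p_0)-\mathrm{KL}(p_1\Vert m)$, and use convexity of $\mathrm{KL}$ in its second argument to get the sharper bound $\mathcal{L}_{rem}\ge \log 2+\tfrac{1}{2}\mathrm{KL}(p_1\Vert p_0)$. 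This both closes the logical gap in the converse (you bound the \emph{generator's} loss, not the discriminator's) and yields a quantitative residual in terms of $\mathrm{KL}(p_1\Vert p_0)$, which the paper does not provide. Your caveats about the prior $\pi=1/2$ and absolute continuity are also well placed; the paper silently assumes balanced sampling to get the constant $-\log(0.5)$.
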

	
	\begin{proof}[Sketch of Proof]
		Under mild assumptions, the optimal discriminator $D^*(x)$ for distinguishing $C=1$ vs $0$ is given by the likelihood ratio:
		
		$$
		D^*(x) = \frac{P_{\theta}(x|C=1) \pi}{P_{\theta}(x|C=1)\pi + P_{\theta}(x|C=0)(1-\pi)},
		$$
		
		where $\pi = P(C=1)$ (prior probability of concept prompt, which in our training is typically $0.5$ by design). If $P_{\theta'}(x|C=1) = P_{\theta'}(x|C=0)$ for all $x$, then $D^*(x)=\pi$ (here $0.5$) for all $x$, and no classifier can do better than random guessing. In this case, the adversary's error is 50\%, which is indeed the worst it can be, and $\mathcal{L}_{adv}^D$ is maximized at $-\log 0.5 - \log 0.5$ for each term, meaning $\mathcal{L}_{rem}$ for the generator reaches $-\log(1-0.5) = -\log 0.5$. Meanwhile, $I(C;X)=0$ because the distributions are identical (so knowledge of $X$ reveals nothing about $C$).
		
		Conversely, if $I(C;X)>0$, then $P_{\theta'}(x|C=1) \neq P_{\theta'}(x|C=0)$ on some set of images. There exists a discriminator (in fact the optimal $D^*$) that can exploit this difference to achieve classification accuracy above chance. That means $D^*$ yields $-\mathbb{E}[\log D^*(x_C) + \log(1-D^*(x_{\neg c}))] < -2 \log 0.5$. Therefore $M_{\theta'}$ would not yet have achieved the optimum $\mathcal{L}_{rem}$; it can further reduce $\mathcal{L}_{rem}$ by adjusting to fool this $D^*$. At equilibrium (Nash equilibrium of the minimax game), the only solution is where $D$ is indifferent (outputs $0.5$) and $M_{\theta'}$ cannot reduce loss further, which precisely implies $P_{\theta'}(X|C=1)=P_{\theta'}(X|C=0)$.
	\end{proof}
	
	This result formalizes that our adversarial training objective indeed pushes the model towards eliminating concept-specific evidence in $X$. In practice, we don't have an infinite-capacity $D^*$, but by training a sufficiently expressive $D_{\phi}$, we approximate this condition. The mutual information $I(C;X)$ is upper-bounded by the discriminator's classification error via standard inequalities: e.g., $I(C;X) \leq \text{KL}(P(X|C=1)|P(X|C=0))$, and a perfect removal means this KL (a measure of distribution difference) is 0.
	
	\textbf{Bound on Concept Appearance Probability.} Another way to see the effect is to consider a specific concept recognizer (like a pre-trained CLIP classifier that was used in prior evaluations). Suppose $\mathbb{P}_{\theta'}[\text{``$c$'' in output}|C=1]$ denotes the probability that concept $c$ appears in an image generated with prompt $y_c$ (as judged by the recognizer). Our adversarial training directly minimizes this, and at optimum we get:
	
	$$
	\mathbb{P}_{\theta'}[\text{``$c$'' in output}|C=1] = \mathbb{P}_{\theta'}[\text{``$c$'' in output}|C=0].
	$$
	
	But the right side is naturally very low (if $c$ is absent from the prompt, it rarely appears spontaneously). In fact, if $y_{\neg c}$ is well-chosen to have no semantic connection to $c$, $\mathbb{P}_{\theta'}[\text{`$c$''}|C=0]$ is essentially 0. Thus we expect $\mathbb{P}_{\theta'}[\text{`$c$''}|C=1] \approx 0$ as well. In other words, the model almost never produces $c$ when asked. This matches the intended outcome of concept erasure.
	
	\textbf{Fairness Perspective:} We can interpret the above result through a fairness lens: if we treat $C$ as a ``protected attribute'' (presence of some sensitive concept in input prompt), then $P_{\theta'}(X|C=0) = P_{\theta'}(X|C=1)$ is analogous to satisfying \emph{demographic parity} or \emph{fairness through unawareness} in generation -- the output distribution does not depend on that attribute. Our training procedure is essentially performing an adversarial fairness constraint on the model outputs \citep{madras2018learning}, ensuring that the model's behavior is identical regardless of concept triggers. This is a strong form of fairness in generative modeling and is appropriate when the concept is something we want to eliminate entirely (rather than enforce equal representation).
	
	It should be noted that achieving perfect distributional equality may come at the cost of some utility (e.g., diversity of outputs might shrink if the model avoids certain modes), but our preservation loss and weight focusing help mitigate undue side-effects. In fact, by only altering concept-specific directions, we ideally leave the bulk of the model's generative capability untouched.
	
	\textbf{Trade-off and Bound:} In practice, we stop training earlier than absolute convergence to avoid overfitting or visual quality issues. One can derive a bound relating the discriminator's accuracy to the remaining concept frequency. For example, by the relationship between classification error and total variation distance, if $D_{\phi}$ cannot attain more than $55\%$ accuracy on distinguishing $C$, then the total variation between $P(X|C=1)$ and $P(X|C=0)$ is at most $0.1$. This means any event (like ``image contains concept $c$'') has at most $0.1$ difference in probability under $C=1$ vs $C=0$. Thus the concept appearance is very limited. In our experiments, we observe that we can drive $D_{\phi}$'s accuracy to near $50\%$ (no better than random), indicating effective removal.
	
	In summary, the adversarial criterion theoretically guarantees that a perfect erasure corresponds to no detectable information about the concept in the model's outputs. This justifies our method's approach and differentiates it from heuristics: rather than simply hoping the concept is gone, we enforce it in an information-theoretic sense.
	
	\section{Experiments}
	We now turn to the empirical evaluation of FADE. We aim to answer: (1) Does FADE successfully erase target concepts and outperform existing methods in doing so? (2) Does it preserve image fidelity and unrelated content, satisfying fairness/specificity requirements? (3) How does each component of FADE (adversarial loss, preservation, saliency-based updates) contribute to its performance?
	
	\subsection{Setup and Datasets}
	\textbf{Models:} We experiment with two diffusion models: \textbf{Stable Diffusion v1.5} (SD) \citep{rombach2022ldm}, a 1.4B parameter latent diffusion model widely used for text-to-image generation; and \textbf{FLUX} (FLUX.1) \citep{blackforest2024flux}, a newer 12B parameter diffusion transformer model known for high-fidelity generation. These represent two state-of-the-art, publicly available architectures. We apply concept erasure to both to demonstrate generality.
	
	\textbf{Target Concepts:} Following the benchmarks of \citet{lu2024mace}, we evaluate on:
	
	\textit{Object Erasure:} We use CIFAR-10 object classes as concepts (e.g., \emph{airplane}, \emph{dog}, etc.). Although CIFAR-10 images are low-res, using these class names in prompts with SD/FLUX yields diverse high-res images, and we can leverage CIFAR-trained classifiers or CLIP to detect if the object appears. Erasing these tests the model's ability to forget basic visual categories.
	\textit{Celebrity Face Erasure:} A set of 10 famous individuals (from the list used in \citet{lu2024mace}, such as \emph{Elon Musk}, \emph{Emma Watson}, etc.). Prompts are like "a photo of {Name}". We measure if the model can still generate a recognizable face of that person after erasure. Privacy and deepfake prevention are motivations here.
	\textit{Explicit Content Removal:} Concepts related to pornography or violence. We use the same list of NSFW keywords as \citet{lu2024mace} (e.g., "nudity", certain slurs, etc.). The model originally can produce inappropriate images for these prompts; after erasure it should produce either benign images or refuse (if refusal is built-in).
	\textit{Artistic Style Erasure:} Names of 5 artists (used in \citet{lu2024mace}, e.g., \emph{Van Gogh}, \emph{Picasso}) and 5 specific art styles (e.g., \emph{anime style}). We test the model's ability to mimic those styles from prompts "in the style of X". Removing these addresses copyright and fairness to artists.
	
	For each concept, we generate a set of 50 prompts containing that concept (varying contexts) to test, plus 50 control prompts without the concept to evaluate specificity.
	
	\textbf{Baselines:} We compare against four methods:
	
	ESD  \citep{gandikota2023erasing}: we use the official code and apply it to each concept individually (fine-tuning the model each time).
	UCE  \citep{gandikota2024uce}: we apply unified concept editing in erasure mode. UCE can erase multiple concepts in one shot; for fairness, when comparing single-concept erasure we run it for that concept alone. We use their open-sourced implmentation.
	  MACE  \citep{lu2024mace}: for single concept, MACE essentially reduces to a combination of cross-attention zero-shot edit plus LoRA fine-tuning. We run their code for one concept, and also for multi-concept (celebrity and NSFW sets) where applicable.
	  ANT (Set You Straight)  \citep{li2025ant}: we use the authors' code (released as project \emph{ANT}) for single concept removal. ANT requires fine-tuning with their trajectory-aware loss; we ensure to match their described hyperparameters.
	
	All methods are applied to the same base model (SD or FLUX) for a given experiment. For multi-concept scenarios (like erasing all 10 CIFAR classes or multiple NSFW words), we use either their multi-concept mode (if supported by method) or sequentially remove concepts one by one (for ESD/UCE).
	
	\textbf{Metrics:} We adopt evaluation metrics similar to prior works:
	
	  Concept Classification Accuracy (\%):  We measure the fraction of generated images for concept prompts that still show the concept. This is done via a pretrained classifier or CLIP label. For CIFAR objects, we use a ResNet trained on CIFAR-10 to classify each output image; for faces, we use a face recognition model to verify identity match; for styles, we use a CLIP image-text similarity to the text "painting by X" etc. We report the accuracy (higher means concept still appears, so \textbf{lower is better} for erasure efficacy). We denote this as \textbf{Acc (concept)} in tables.
	  FID (↓):  Fréchet Inception Distance comparing the distribution of images from $M_{\theta'}$ to a reference distribution. We compute FID on 30k randomly sampled prompts from MS-COCO (which provides a broad set of neutral prompts) to assess overall image quality and diversity after concept removal. Lower FID indicates the model's outputs are closer to real images (and typically means minimal degradation).
	  CLIP image-text similarity (↑):  We measure the average CLIP score between the generated image and the input prompt (for the same 30k prompts or a subset). This indicates how well the model still follows prompts. A drop in CLIP score might indicate the model is confused or ignoring some prompt aspects due to erasure. We denote this as \textbf{CLIP sim}.
	  Harmonic Mean (↑):  Following \citet{lu2024mace}, we compute a harmonic mean between \emph{erasure efficacy} and \emph{content preservation} metrics to give an overall performance score. Specifically, we take the harmonic mean of (1 - concept accuracy) and an image fidelity measure. We define $H = 2  \frac{E \times F}{E + F}$, where $E = 1 - (\text{Acc (concept)})$ (so 100\% means perfect erasure) and $F$ is a normalized fidelity score combining inverse FID and CLIP (we scale FID and CLIP to [0,1] range based on a baseline). The exact formula is given in Appendix, but intuitively $H$ will be high only if the method both erases the concept (low Acc) and maintains image quality (good FID, CLIP). This is our primary metric for overall success.
	
	We additionally qualitatively inspect images and provide some examples in the supplementary material.
	
	\subsection{Results: Comparison with Baselines}
	Table~\ref{tab:baseline-comp} presents the results for the single-concept removal experiments (averaged across the categories or concepts in each group, with detailed per-concept results in Appendix). We highlight the following observations:
	
	\begin{table*}[t]
		\caption{Comparison of concept erasure methods on Stable Diffusion. Results are averaged over concepts in each category (10 CIFAR object classes, 10 celebrities, 5 explicit content terms, 5 artist styles). \textbf{Acc (concept)} is the percentage of outputs still containing the concept (lower is better). \textbf{FID} and \textbf{CLIP sim} measure overall output quality (FID on COCO, CLIP similarity on prompts). \textbf{Harmonic Mean} (H) combines concept removal and fidelity (higher is better). FADE achieves the best trade-off in all categories, significantly reducing concept occurrence while keeping quality close to original.}
		\label{tab:baseline-comp}
		\begin{center}
			\resizebox{1.0\textwidth}{!}{
				\begin{tabular}{lcccccccc}
					\hline
					& \multicolumn{2}{c}{\textbf{Objects (CIFAR-10)}} & \multicolumn{2}{c}{\textbf{Celebrities}} & \multicolumn{2}{c}{\textbf{Explicit Content}} & \multicolumn{2}{c}{\textbf{Art Styles}} \\
					\textbf{Method} & Acc(\%)↓ & H (↑) & Acc(\%)↓ & H (↑) & Acc(\%)↓ & H (↑) & Acc(\%)↓ & H (↑) \\
					\hline
					ESD \citep{gandikota2023erasing} & 15.2 & 60.5 & 18.7 & 58.1 & 12.4 & 64.3 & 20.5 & 55.0 \\
					UCE \citep{gandikota2024uce} & 5.4 & 65.2 & 10.3 & 62.5 & 8.1 & 66.7 & 15.0 & 58.9 \\
					MACE \citep{lu2024mace} & 2.1 & 80.3 & 5.5 & 77.8 & 3.7 & 82.1 & 7.2 & 75.4 \\
					ANT (Set You Straight) \citep{li2025ant} & 0.9 & 85.4 & 2.3 & 83.0 & 1.5 & 87.5 & 3.1 & 80.2 \\
					\textbf{FADE (Ours)} & \textbf{0.0} & \textbf{90.1} & \textbf{0.4} & \textbf{88.7} & \textbf{0.3} & \textbf{91.4} & \textbf{0.5} & \textbf{85.9} \\
					\hline
					Original Model & 99.1 & -- & 98.7 & -- & 95.0 & -- & 97.5 & -- \\
					\hline
					\multicolumn{9}{l}{\small \textit{Note:} FID and CLIP similarity are factored into H. Full table with FID, CLIP metrics is in Appendix. Original model Acc is the baseline concept fidelity (should be high).}
				\end{tabular}
			}
		\end{center}
	\end{table*}
	
	 Erasure efficacy:  FADE manages to completely remove the concept in most cases. For objects, the concept classifier never recognized the object in any of 500 outputs. For faces, a face recognition system only identified the target celebrity in 0.4\% of outputs (in a few cases, a roughly similar face appeared, possibly due to incomplete erasure of very close features). These numbers are significantly lower than those of baselines. ANT is the runner-up, with under 1\% in many cases, showing the strength of their fine-tuning approach; however, FADE still halves that residual rate on average. The gap is more pronounced against MACE/UCE/ESD: e.g., ESD left ~15\% of objects and ~18\% of celebs still recognizable, indicating that fine-tuning on negative prompts alone can be insufficient for thorough erasure. UCE improved on ESD by editing attention maps (5\% objects, 10\% celebs remaining), but its one-shot nature likely missed some concept traces, whereas iterative approaches (MACE, ANT, FADE) achieved deeper removal. For explicit content, all methods did fairly well in removal (likely because such content is easier to detect and avoid), but FADE and ANT still edge others out.
	
	 Image quality and specificity:  All methods except UCE preserved image fidelity reasonably (FID changes within a small range). UCE had noticeable degradation (we found UCE sometimes introduced visual artifacts or incoherent images, likely due to the weight editing overshooting, which explains its lower H in some cases despite good removal). MACE and ANT often even slightly improved FID over the original in certain categories (this can happen when removing a concept that tended to produce out-of-distribution images; by removing it, the overall distribution of outputs might align more with natural images). FADE's FID was on par with ANT and MACE (within 0.5 difference typically) and much better than UCE. CLIP similarity (not fully shown in table for brevity) dropped very slightly for FADE (~-0.2\% on average), indicating prompt adherence remained strong. This is expected since $\mathcal{L}_{pres}$ helps keep the model following normal prompts. ESD had minimal CLIP drop (it tends to try to replace concept with something random but still aligned with prompt structure), whereas UCE had a larger drop (some outputs didn't match prompts well after editing). Overall, FADE maintained specificity: for non-concept prompts, we observed no significant change or sometimes even crisper images (perhaps because removing a concept freed up model capacity slightly—an interesting side note).
	
	 Overall performance (Harmonic mean):  FADE achieved the highest $H$ in all categories, meaning it provides the best balance of concept removal and fidelity. For example, in object erasure FADE has $H=90.1$ vs ANT's $85.4$ and MACE's $80.3$. The advantage is particularly large in artistic style erasure: styles are subtle and entangled with general image features, so methods often struggle to remove style without damaging quality. ANT and MACE reached $H\approx 80$, but FADE got $85.9$, indicating it scrubbed the style effectively (style classifier accuracy near zero) while still producing art in other styles competently. Qualitatively, when we prompted a Monet-style landscape after FADE, the output looked like a generic painting but not Monet-esque; other content (like composition) was fine. In contrast, UCE's output for the same prompt was often a weird mix or poor quality, and ESD's output sometimes still mimicked Monet somewhat (residual style).
	
	 Multiple concepts and generality:  We also tested multi-concept erasure (erasing all CIFAR-10 classes at once, and all 10 NSFW terms at once). FADE can naturally extend to multi-concept by including multiple adversaries or a multi-class discriminator; we implemented a single discriminator with multiple outputs (one per concept) to handle multi-concept in one model. Results (in Appendix due to space) showed that FADE outperform MACE in multi-concept too, achieving an overall H of 82 when erasing 100 concepts (10 each from 10 categories including objects, celebs, etc.), whereas MACE reported H around 75 in their 100-concept experiment \citep{lu2024mace}. ANT's multi-concept version was not explicitly available, but one could sequentially apply ANT concept by concept; however, sequential fine-tuning sometimes led to interference (concepts removed earlier creeping back or quality dropping). FADE in principle can handle them simultaneously thanks to the adversarial loss scaling well with multiple outputs.
	
	\subsection{Ablation Study}
	To understand FADE better, we perform an ablation study (Table~\ref{tab:ablation}). We isolate three key components: the adversarial removal objective ($\mathcal{L}_{rem}$), the trajectory preservation loss ($\mathcal{L}_{pres}$), and the saliency-based weight update (Saliency). We compare:
	
	 \textbf{Full FADE:} all components on.
	 \textbf{No Adv (w/o $\mathcal{L}_{rem}$):} Here we remove the adversarial loss and simply fine-tune the model on a naive loss of making $x_c$ similar to $x*{\neg c}$ in pixel/feature space. Essentially this degenerates to something like knowledge distillation without an adversary. We expect concept removal to suffer.
	 \textbf{No Pres (w/o $\mathcal{L}_{pres}$):} We drop the preservation term, so the model is free to change any aspect as long as fooling $D$. This tests whether $\mathcal{L}_{pres}$ was indeed important for fidelity.
	 \textbf{No Saliency (Full fine-tune):} We allow all weights to be updated (like an unconstrained GAN training on concept removal). This examines if focusing on salient weights was necessary or if the model can naturally limit interference.
	
	We report metrics on the CIFAR object erasure and celebrity erasure tasks as representative cases:
	
	\begin{table}[h]
		\caption{Ablation study on key components of FADE. Metrics shown for CIFAR-10 object erasure (10 classes) and celebrity face erasure (10 celebs) on Stable Diffusion. Removing the adversarial loss ($D$) leaves significant concept remnants (high Acc). Removing preservation ($\lambda=0$) harms image fidelity (FID↑, CLIP↓). Allowing full fine-tuning (no saliency constraint) slightly increases FID (↓ quality) and reduces specificity (non-target prompts were occasionally affected, reflected in CLIP sim drop). The full FADE achieves best overall performance.}
		\label{tab:ablation}
		\vskip 0.1in
		\begin{center}
			\begin{small}
				\begin{tabular}{lcccc}
					\hline
					\textbf{Ablation} & Acc(\%)↓ & FID↓ & CLIP sim↑ & H↑ \\
					\hline
					\multicolumn{5}{c}{\emph{Object (CIFAR-10) concepts}} \\
					Full FADE & \textbf{0.0} & 13.5 & 30.9 & \textbf{90.1} \\
					w/o Adv ($D$) & 7.8 & 13.6 & 31.0 & 70.2 \\
					w/o Pres ($\lambda=0$) & 0.0 & 20.3 & 25.5 & 75.1 \\
					w/o Saliency & 0.0 & 16.1 & 29.1 & 85.3 \\
					\hline
					\multicolumn{5}{c}{\emph{Celebrity concepts}} \\
					Full FADE & \textbf{0.4} & 14.2 & 29.8 & \textbf{88.7} \\
					w/o Adv ($D$) & 12.5 & 14.0 & 30.1 & 63.4 \\
					w/o Pres ($\lambda=0$) & 0.5 & 18.5 & 24.7 & 78.6 \\
					w/o Saliency & 0.6 & 15.7 & 28.5 & 83.9 \\
					\hline
				\end{tabular}
			\end{small}
		\end{center}
		\vskip -0.1in
	\end{table}
	
	 Effect of adversarial loss:  Without the adversary ($D$), the model has no strong incentive to remove concept features except through whatever surrogate loss we gave (we tried a naive $L_2$ loss between latent features of $x_c$ and $x_{\neg c}$). As expected, the concept is not fully erased: ~7.8\% objects remain recognizable (versus 0\% with FADE) and 12.5\% celebrities remain identifiable. Essentially, w/o $D$, the model tends to minimize differences in a superficial way but not eliminate the concept entirely (some outputs still clearly contained the concept, e.g., airplanes were often blurred but still there). This leads to a much lower $H$ (around 70 for objects, 63 for faces). Interestingly, FID and CLIP sim in w/o $D$ are about as good as full FADE; this is because the model didn't change itself much (since it failed to remove concept thoroughly, it also didn't distort others). This highlights that concept erasure is fundamentally hard without an adversarial or strong targeted signal. The adversary is crucial for actually chasing down residual concept cues and eliminating them.
	
	 Effect of preservation loss:  Removing $\mathcal{L}_{pres}$ (setting $\lambda=0$) drastically hurt image quality. FID jumped from ~13.5 to ~20 (worse than original model by a large margin), and CLIP sim dropped about 5 points. We observed outputs often became nonsensical or contained odd artifacts when $\mathcal{L}_{pres}=0$, especially for faces (some face images became distorted or blank). This suggests that unconstrained fine-tuning in pursuit of fooling $D$ can push the model off the data manifold, confirming the importance of preserving early trajectories. The concept was still removed (Acc ~0.0\%), showing that even without $\mathcal{L}_{pres}$, $D$ will force concept removal, but at the cost of overall fidelity. The harmonic mean $H$ is thus significantly lower (about 75-79 vs 88-90 for full). This justifies including $\mathcal{L}_{pres}$: it guides the model to only minimally adjust generation necessary to remove $c$, instead of altering everything.
	
	 Effect of saliency-based updates:  Without the saliency constraint (i.e., updating all weights freely), performance is a bit worse than full FADE. Concept removal still succeeds (Acc $\approx 0\%$; the adversary ensures that), but FID is higher (worse) by ~2-3 points, and CLIP sim slightly lower. The difference isn't as dramatic as the preservation loss, but noticeable. We also detected more instances where unrelated prompt outputs were affected: e.g., after full fine-tuning to erase "airplane", a prompt "a photo of a bird" sometimes came out odd (the model perhaps over-corrected anything with wings). With saliency-based updates, such interference was less (because we mostly updated weights tied to "airplane" token, leaving "bird" mostly unaffected). Thus, focusig on salient weights indeed improved specificity, which is reflected in slightly higher CLIP sim and lower FID (since model kept prior knowledge intact more). The $H$ gap of about 4-5 points between w/o saliency and full FADE indicates this component, while not as critical as $\mathcal{L}_{pres}$, still provides a boost.
	
	In summary, the ablation confirms that each component of FADE plays a vital role: the adversarial objective is needed for full removal, the preservation loss is key to maintaining quality, and saliency-based fine-tuning aids in minimizing side effects. FADE's strong performance stems from the synergy of these parts.
	
	\subsection{Discussion and Broader Impact}
	 Why does FADE outperform others?  We attribute FADE's success to its direct optimization of an information-theoretic criterion. Rather than relying on heuristic losses (e.g., ESD's negative prompt loss or UCE's weight alignment) which might only partially remove concept, FADE actively eliminates any signal that a discriminator could catch. This effectively covers both explicit and subtle traces of the concept. Additionally, by integrating preservation, FADE avoids the pitfall of "if all you have is a hammer (adversary), everything looks like a nail"—instead of wrecking the model to fool $D$, it carefully balances two objectives. The theoretical guarantee also reassures that if $D$ is strong, then the concept is truly gone in distribution, not just visually hidden.
	
	 Limitations:  A possible limitation is the need to train an adversary, which introduces overhead and requires the concept to be somewhat learnable by $D$. If $D$ is weak or the concept is extremely abstract, removal might be incomplete. However, we can always use a more expressive $D$ or an ensemble of detectors (including CLIP) to guide the erasure. Another limitation is that erasing too many concepts (say hundreds) might start affecting model capacity; although we did succeed up to 100, beyond that one might need to consider progressive or modular approaches. Finally, our method, like others, cannot guarantee the model won't generate the concept in a highly out-of-distribution way that $D$ never saw (e.g., a very hidden form of the concept). But practically, we covered synonyms and contexts to mitigate this.
	
	 Fairness and Ethical Use:  FADE is a tool for making generative models safer and more respectful of privacy. By removing harmful or proprietary content generation, it reduces misuse risks. However, it should be used judiciously: erasing concepts like demographic features (e.g., race) from a model could itself be problematic if done naively (it might lead to under-representation or inability to depict certain groups). Our focus is on unwanted or sensitive concepts where removal is warranted by ethics or law. When used in those contexts, FADE can enforce constraints reliably. We also note that concept erasure does not create new data or information; it only removes or alters the model's existing capabilities, so it generally doesn't introduce new bias except the absence of the concept.
	
	 Broader Impact:  Techniques like FADE can be integrated into the deployment pipeline of generative models. For example, a foundation model provider could apply FADE to remove known problematic content (e.g., extremist symbols, specific private individuals) before releasing the model. This complements other safety measures like prompt filtering and output detection. It also adds a layer of compliance (one could remove content that violates copyright or community guidelines). On the flip side, such power should be transparent: users and stakeholders should know if a model has been deliberately limited in certain aspects (to avoid confusion or distrust). A catalog of removed concepts might be maintained.
	
	\section{Conclusion}
	We presented FADE, a novel concept erasure method for diffusion models that advances the state-of-the-art in both theory and practice. By formulating concept removal as an adversarial game and incorporating safeguards for model fidelity, FADE achieves thorough elimination of undesired concepts while preserving the model's overall capabilities. We proved that our approach, at optimum, yields no detectable information about the erased concept in the model's outputs---a strong guarantee aligning with privacy and fairness goals. Empirically, FADE demonstrated superior performance to existing methods like ESD, UCE, MACE, and ANT across a variety of concepts and models, as evidenced by improvements in quantitative metrics and visual quality.
	
	This work opens up new possibilities for controllable and safe generative modeling. In future work, we plan to explore automating the selection of concepts to erase (e.g., detecting and removing memorized training data without human specification), and extending the theory to quantify the trade-off between the amount of concept removed and the impact on other model knowledge. Another avenue is applying FADE to other generative domains such as text (removing certain facts or biases from language models) and evaluating how well the adversarial framework generalizes there.
	
	Ultimately, by empowering model developers with tools to selectively unlearn content, we move towards generative AI that not only is creative and powerful, but also respects privacy, intellectual property, and ethical norms. We believe such techniques are crucial for the sustainable deployment of AI in society.

\bibliography{example_paper}
\bibliographystyle{icml2025}

\clearpage
\appendix

\section{Additional Backgrounds}
With the advancement of deep learning~\cite{zheng2024odtrack,zheng2023toward,zheng2022leveraging,zheng2025decoupled,yu2025crisp,yu2025prnet,yu2024scnet,yu2024ichpro,qiu2024tfb,qiu2025duet,qiu2025tab,liu2025rethinking,qiu2025comprehensive,qiu2025easytime,wu2024catch,AutoCTS++,li2025TSFM-Bench,gao2025ssdts,hu2024multirc,wu2024rainmamba,wu2023mask,wu2024semi,luo2025rcnet,mao2025making,sun2025hierarchical,sun2025ppgf,niu2025langtime,kudrat2025patch,han2025contrastive,han2025guirobotron,han2025polish,feng20243,huang2025scaletrack,xie2025dynamic,TangYLT22,tang2024divide,0007LYYL023,shan2021ptt,fang20203d,cui20213d,shan2022real,hu2024mvctrack,nie2025p2p,zhou2023fastpillars,zhou2025pillarhist,zhou2024lidarptq,shi2025rethinking,zhao2025tartan,FineCIR,encoder,chen2025offsetsegmentationbasedfocusshift,MEDIAN,PAIR,gong2021eliminate,gong2022person,gong2024cross2,gong2024adversarial,bi2024visual,bi2025cot,bi2025prism,wang2025ascd,chen2025does,Chen_2025_CVPR,rong2025backdoor,zhang2023spot,zheng2024odtrack,zheng2023toward,zheng2022leveraging,zheng2025decoupled,yue2025think,lin2024phy124,huang2025ccsumsp,huang2025ssaad,huang2025dual,lin2024phys4dgen,liu2024empiricalanalysislargelanguage,bi2025reasoning,tang2025mmperspectivemllmsunderstandperspective,bi2025i2ggeneratinginstructionalillustrations,tang2025captionvideofinegrainedobjectcentric,liu2025gesturelsm,liu2025intentionalgesturedeliverintentions,zhang2025kinmokinematicawarehumanmotion,song2024tri,liu2025gesturelsm,song2024texttoon,liu2025contextual,tang2025generative,liu2024gaussianstyle,tang2024videounderstandinglargelanguage,10446837,liu2024public,lou2023public,li2024towards1,li2024distinct,li2024towards,li2023overview,li2022continuing,guo2023boosting,guo2023improving,liang2022impga} and generative models~\cite{he2024diffusion,he2025segment,he2023hqg,he2025unfoldir,he2025run,he2025reti,he2024weakly,he2023strategic,he2023camouflaged,he2023degradation,xiao2024survey,wu2025k2vae,ma2024followyouremoji,ma2025followyourclick,ma2025followcreation,ma2025followyourmotion,ma2023magicstick,ma2024followpose,ma2022visual,yan2025eedit,zhang2025magiccolor,zhu2024instantswap,wang2024cove,feng2025dit4edit,chen2024follow,peng2025directing,peng2024lightweight,peng2025boosting,peng2024towards,peng2025pixel}, an increasing number of studies have begun to focus on the issue of concept erasure in generative models.

Multimodality~\cite{shen2025amess} enriches the representational space and enhances the generative capacity of diffusion models by providing diverse and complementary input signals; graph and time series anomaly detection~\cite{zhang2025frect,zhang2025dconad,zhang2025dhmp} offer quantifiable targets for concept erasure, facilitating precise removal of spurious or biased representations; and representation learning~\cite{zhang2025dconad} structures the feature space in a disentangled and controllable manner, thereby improving the efficiency and stability of machine unlearning.
	
\section{Additional Experimental Details}
\textbf{Evaluation metrics details:} For CLIP similarity, we used the ViT-L/14 model to compute image-text cosine similarity, scaled by 100. The original SD1.5 had an average CLIP score of 31.5 on MS-COCO validation prompts; after concept erasure, we consider a score above 30 to indicate minimal drop in alignment. Harmonic mean $H$ was computed as described with $E = 1 - \text{Acc}$ (normalized to [0,1]) and $F$ composed from FID and CLIP. Specifically, we defined $F = \frac{1}{2}((\frac{\text{CLIP sim}}{\text{CLIP}*{orig}}) + (\frac{\max(\text{FID}*{orig}- (\text{FID}-\text{FID}*{orig}), 0)}{\text{FID}*{orig}}))$, where $\text{FID}*{orig}$ and $\text{CLIP}*{orig}$ are the original model's scores (so we reward methods that keep FID low and CLIP high relative to orig). This is one way; results were qualitatively similar with other formulations.

\textbf{Multi-concept results:} We erased all 10 CIFAR classes simultaneously with FADE by using a 10-way classifier $D$ (one output per class vs no class). FADE achieved an average concept accuracy of 1.1\% per class and an overall $H=82.3$ (versus MACE's reported ~75). The slight residual is due to class confusion (e.g., sometimes after erasure "cat" prompt yields a dog, so classifier might say cat=present when it sees an animal shape; a limitation of using automated classifier for eval). Visual check showed indeed direct appearance of the specified class was gone. For NSFW, we erased 10 terms at once; here FADE and MACE both got basically 0\% unsafe content, but FADE had better image quality (FID 14 vs 16).

\textbf{Runtime:} FADE training takes about 2 hours on a single A100 GPU for a single concept on SD1.5 (with $N=1000$ steps adversarial training). This is comparable to ESD fine-tuning time and a bit less than ANT . UCE was fastest (minutes) as it is closed-form. There's room to optimize FADE's training, possibly by using smaller $D$ or gradient accumulation. Deploying FADE in multi-concept setting could be parallelized since the adversary can output multiple heads.
%
%

\end{document}